\documentclass{article}

\usepackage[preprint]{corl_2026}
\usepackage[nolist]{acronym}
\usepackage{amsmath}
\usepackage{amssymb}
\usepackage{amsthm}
\usepackage{amsfonts}
\usepackage{graphicx}
\usepackage[T1]{fontenc}
\usepackage{tabularx}
\usepackage{booktabs}

\newcolumntype{L}[1]{>{\raggedright\arraybackslash}p{#1}}
\newcolumntype{Y}{>{\raggedright\arraybackslash}X}

\newcommand{\method}{\texttt{hint$^2$}}

\newcommand{\X}{\mathbf{X}}
\newcommand{\F}{\mathbf{F}}
\newcommand{\G}{\mathbf{G}}
\newcommand{\U}{\mathbf{U}}
\newcommand{\expectation}{\mathop{\mathbb{E}}}

\newtheorem{definition}{Definition}
\newtheorem{proposition}{Proposition}

\newenvironment{proofsketch}{%
  \proof}{\endproof}

\begin{acronym}[TDMA]
\acro{LTL}{Linear Temporal Logic}
\acro{STL}{Signal Temporal Logic}
\acro{MDP}{Markov Decision Process}
\acrodefplural{MDP}{Markov Decision Processes}
\acro{DBA}{Deterministic B{\"u}chi Automaton}
\acrodefplural{DBA}{Deterministic B{\"u}chi Automata}
\end{acronym}

\title{\method{}: \underline{H}ierarchical World Models for \underline{In}ference-\underline{T}ime \underline{T}emporal Logic Guidance}

\author{
 Moritz Zoellner, Anastasios Manganaris, Ahmed H. Qureshi, and Rohan Paleja\\
\texttt{\textbf{Department of Computer Science, Purdue University}} \\
\texttt{\{zoellner, amangana, ahqureshi, rpaleja\}@purdue.edu}
}

\hypersetup{
    pdfauthor={Moritz Zoellner, Anastasios Manganaris, Ahmed H. Qureshi, Rohan Paleja},
    pdftitle={hint2: Hierarchical World Models for Inference-Time Temporal Logic Guidance},
    pdfsubject={Preprint}
}

\begin{document}
\maketitle
\suppressfloats[t]

\begin{abstract}

A central goal of robot learning is to enable robots to execute rich instructions specified at runtime. Large-scale language-conditioned policies have made substantial progress toward this goal, yet still struggle with temporal structure and safety constraints. Linear Temporal Logic (LTL) provides a powerful language to express complex, non-Markovian instructions. However, guiding learned manipulation policies toward LTL satisfaction remains challenging because modern policies generate short-horizon action chunks and replan in closed loop, while almost all LTL specifications are evaluated over long-horizon trajectories. In this paper, we introduce \method{}, a method for guiding short-horizon policies toward satisfying complex LTL specifications at inference time using hierarchical world models. Our key idea is to derive two separate guidance objectives using each world model's abstraction level. A high-level model predicts future action-induced transitions in task-relevant atomic propositions to guide progress through the LTL automaton, while a low-level dynamics model predicts immediate state evolution for accurate local safety guidance. Our results show that \method{} overcomes the limitations of current LTL-guided diffusion methods, outperforms existing inference-time steering methods in CALVIN, and successfully completes instructions with complex liveness and safety constraints more elegantly than language-conditioned alternatives. Finally, we demonstrate that \method{} can handle complex instructions on a real UR5e manipulator. Videos are available on
\href{https://anonymous-hint2.github.io/}{our project page}.

\end{abstract}

\keywords{Hierarchical World Models, Inference-Time Policy Steering, Generalist Robot Policies, Temporal Logic}

\section{Introduction}

\begin{figure}[t]
    \centering
    \includegraphics[width=1\linewidth]{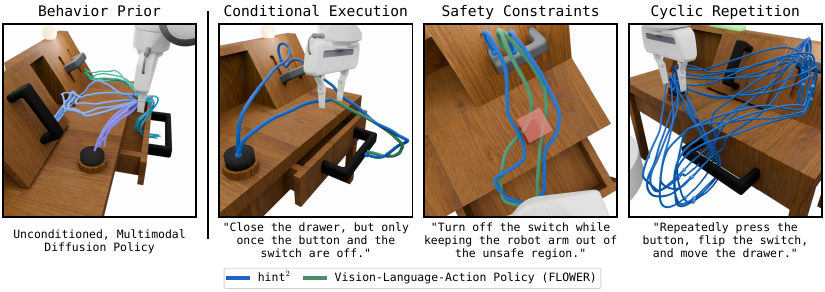}
    \vspace{-1.7em}
    \caption{
\textbf{Our method \method} can guide a diffusion policy based on TL constraints specified at inference time. 
In the CALVIN environment, \texttt{hint$^2$} executes complex long-horizon instructions that a state-of-the-art vision-language-action policy trained on the same data fails to complete.
}
\vspace{-1.1em}
    \label{fig:results_highlights}
\end{figure}

Large-scale imitation learning has made remarkable progress toward robots that can execute language instructions specified at inference time~\cite{kim2024openvla, natarajan2023human}, showing impressive generalization in both language semantics~\citep{zitkovich2023rt} and physical behaviors~\citep{ye2026world}.
However, these generalist robot policies continue to struggle with long-horizon, non-Markovian instructions~\citep{zhang2025vlabench,fan2025long} and safety constraints specified at runtime~\citep{zhang2025safevla,hu2025vlsa}.
Learning this behavior by simply providing more language-conditioned demonstrations remains challenging because the amount of required training data grows combinatorially with task horizon~\citep{mandlekar2020learning} and deployment-time safety preferences are often unknown during training. 
Temporal logic (TL) provides a structured framework for expressing temporal and spatial constraints~\citep{kress2009temporal,baier2008principles,maler2004stl}, while remaining closely aligned with natural language through language-to-logic interfaces~\citep{liu2022lang2ltl,liu2024lang2ltl}.
Enabling modern learned policies to execute TL specifications would address two central limitations of language-conditioned policies: implicit temporal progress tracking and the lack of a grounded interface for runtime safety constraints.

However, executing general TL specifications with learned policies remains limited. 
One approach is to condition a policy directly on the specification during training, but this inherits the scaling limitations of language conditioning~\cite{meng2025telograf}.
Inference-time guidance is attractive because it leaves the learned policy unchanged and instead modifies how actions are selected from its distribution. 
Recent work has shown that such guidance is feasible in complex settings by steering diffusion policies toward simple objectives such as goal images or human-supplied keypoints~\cite{du2025dynaguide,qi2026gpc}.
TL poses a richer objective that is more challenging for inference-time guidance. 
Current TL-guided diffusion approaches generate state-action trajectories for the full task and guide them using differentiable robustness values~\cite{feng2024ltldog,meng2024diversecontrollabledp}.
This fails in complex domains, where predicting a state trajectory over the full task requires long-horizon world-model rollouts that face compounding errors as a fundamental limitation~\cite{janner2019trust, hafner2023mastering}.
Additionally, modern manipulation policies generate only short action chunks and replan in a closed loop~\cite{chi2025diffusion}. 
This creates a fundamental horizon mismatch for inference-time TL guidance: evaluating robustness over states in this short horizon severely restricts the supported specifications and effectively reduces guidance to safety filtering, since the truth value of many TL specifications may only be decided many steps into the future~\cite{kapoor2026safedec,zoellner2026temporal}.

We address this mismatch by introducing \method{}, a method for guiding short-horizon diffusion policies toward long-horizon TL satisfaction. 
\texttt{hint$^2$} learns two world models at different abstraction levels. 
A high-level world model predicts how an action chunk induces future transitions in task-relevant atomic propositions, enabling guidance toward progress through the \ac{LTL} automaton. 
A low-level world model predicts the immediate state consequences of those actions, enabling \ac{STL} robustness guidance for safety constraints where precise geometry matters. Together, these world models allow \texttt{hint$^2$} to select action chunks from an unconditioned, multimodal diffusion policy that satisfy both long-horizon planning objectives and local safety requirements specified at inference-time. We conduct a systematic evaluation of how \method{} compares to existing guidance methods in their intended settings. Figure~\ref{fig:results_highlights} illustrates the capabilities enabled by \method{} in the CALVIN~\cite{mees2022calvin} environment, where it executes complex instructions that language-conditioned counterparts~\cite{reuss2025flower} fail to complete.

\vspace{-2mm}
\section{Related Work}
\vspace{-2mm}

\textbf{Temporal logic for robot task specification.}
TL specifications have long been used to express non-Markovian robot behaviors~\citep{kress2009temporal} and can be decomposed into \textit{liveness constraints}, which require desired events to eventually occur, and \textit{safety constraints}, which require undesired events to never occur~\citep{belta2019formalmethodsoptimalcontrolsurvey,alpern1987safetyliveness}.
Classical approaches typically compose an abstraction of the robot-environment dynamics with an automaton encoding an LTL specification, then define a controller over the resulting product system~\citep{wongpiromsarn2010receding,ding2014ltlrecedinghorizoncontrol}.
This automaton-based view has also shaped reinforcement learning for non-Markovian objectives, where automaton progress augments the agent state to enable Markovian policy learning~\citep{sadigh2014learning,hasanbeig2018logically,toroicarteRewardMachines2022,jothimurugan2021compositional}.
A complementary line of work extends TL objectives beyond discrete automata through STL robustness, differentiable logic losses, and model-based controllers for continuous systems~\citep{donze2010robust,leung2023backpropagation,aksaray2016q,kapoor2020model,meng2023signal}.
Our work draws on both views by using automata for long-horizon task progress and differentiable robustness for local safety guidance.

\textbf{Guiding learned policies at inference time.}
The dominant approach to controlling learned robot policies is training-time conditioning, most commonly through language conditioning~\citep{chi2025diffusion,zitkovich2023rt,kim2024openvla,black2024pi0}, but also through conditioning on formal specifications~\citep{meng2025telograf,kapoor2024logically}.
Inference-time guidance instead steers a pretrained policy toward objectives not necessarily covered during training, for example by biasing diffusion denoising~\citep{ho2020denoising,bansal2024universal} or selecting sampled actions that score best under a learned objective~\citep{qi2026gpc}.
In robotics, this has enabled steering toward goal observations, human preferences, and safety objectives~\citep{wang2025inference,du2025dynaguide,ma2025constraint}.
TL-based guidance has also been explored by evaluating specifications over state-action trajectories~\citep{janner2022diffuser,feng2024ltldog,meng2024diversecontrollabledp,zhong2023guided}, but remains limited to settings where the full trajectory can be generated and evaluated in one shot.
Our method uses the same guidance mechanisms, but derives an objective that steers short-horizon policies toward long-horizon TL satisfaction.

\textbf{World models for long-horizon robot behavior.}
World models have become increasingly effective at predicting action-induced state changes in complex control domains~\citep{ye2026world, hafner2023mastering}.
Combined with differentiable robustness, such predictions can guide policies toward local STL satisfaction~\citep{leung2023backpropagation,meng2024diversecontrollabledp}, but remain fundamentally limited for long-horizon evaluation due to compounding rollout errors.
Recent work addresses this limitation by moving long-horizon reasoning to a higher abstraction level using hierarchical world models that guide lower-level controllers~\citep{huang2026h,zhang2026hierarchical,hansen2025hierarchical}.
These abstractions extend the planning horizon, but are either defined through task-specific planning domains, which fixes the supported structure ahead of time, or learned as latent states, which reintroduces the burden of discovering the right planning abstraction from data.
\method{} instead uses the LTL automaton as a specification-induced abstraction, which compactly captures temporal progress, while decoupling high-level planning from the fixed behavior policy used for execution.

\vspace{-2mm}
\section{Preliminaries and Problem Formulation}
\label{sec:preliminaries}
\vspace{-2mm}

\textbf{Temporal Logic.}
\label{sec:tl}
\ac{LTL} \cite{pnueli1977ltl} is a propositional logic for reasoning over time. \ac{LTL} formulas are defined with respect to a set of atomic propositions $AP$, and more complex propositions are built from simpler expressions, including $p \in AP$, using the standard boolean operators ($\neg$, $\wedge$, $\vee$) and the temporal operators ``next'' ($\X$), ``eventually'' ($\F$), ``always'' ($\G$), and ``until'' ($\U$). We refer to \citet{baier2008principles} for a full overview of \ac{LTL}. Our method utilizes a quantitative semantics for \ac{LTL} similar to those used for \ac{STL} \citep{maler2004stl} but without support for time intervals. 
For a \ac{MDP} $\mathcal{M}$, with state space $S$ and action space $A$, the quantitative semantics define a robustness function $\rho : S^\omega \times \Phi \rightarrow \mathbb{R}$ that produces a positive or negative value indicating the satisfaction or violation of an \ac{LTL} formula $\phi \in \Phi$ by a (possibly infinite) trajectory $\tau \in S^{\omega}$.

\textbf{Automata.}
\label{sec:automata-preliminaries}
Our method specifically focuses on \ac{LTL} formulas in the recurrence class \citep{manna1990hierarchy}, which can be translated into abstract machines called \acp{DBA} \cite{baier2008principles}. A \ac{DBA} obtained from an \ac{LTL} formula $\phi$ is a tuple $\mathcal{A}_\phi = (Q, \Sigma, \delta, q_0, F)$, consisting of a set of states $Q$, the alphabet $\Sigma = 2^{AP}$, a transition function $\delta : Q \times \Sigma \rightarrow Q$, an initial state $q_0 \in Q$, and a set of accepting states $F \subseteq Q$. A labeling function $L : S \to 2^{AP}$ is obtained by evaluating each atomic proposition $p \in AP$ at state $s \in S$, and an infinite sequence of labels $L(s_0), L(s_1), \ldots$ induces a sequence of automaton states $q_0, q_1, \ldots$ via $\delta$, satisfying $\phi$ if and only if the sequence visits $F$ infinitely often. Augmenting an \ac{MDP} $\mathcal{M}$ with the states $Q$ and transition function $\delta(q_t, L(s_{t+1}))$ of an automaton $\mathcal{A}$ yields a product \ac{MDP}, denoted by $\mathcal{M}_\mathcal{A}$, in which a minimal Markovian memory of task-relevant history is provided by the automaton state $q_t$. For our method, we define an automaton potential vector $v \in \mathbb{R}^{|Q|}$ \citep{ding2014ltlrecedinghorizoncontrol} with the $q$-th element $v_q = \max_{q' \in F} \gamma^{d_{\mathcal{A}}(q, q')}$, where $\gamma \in [0, 1]$ is a discount factor and $d_{\mathcal{A}}(q, q')$ is the unweighted shortest-path distance from $q$ to $q'$.

\textbf{Problem Formulation.}
\label{sec:problem}
For an \ac{MDP} $\mathcal{M}$, let $\pi(a_{t:t+H} | s_t)$ be a diffusion policy, with action horizon $H$, trained to imitate an expert policy $\pi^*(a_{t:t+H} | s_t)$ \citep{chi2025diffusion}. We wish to obtain a policy $\hat{\pi}(a_{t:t+H} | s_t, q_t, \phi)$ that minimizes $D_{\mathrm{KL}}(\hat{\pi} \| \pi)$, where $D_{\mathrm{KL}}$ is the KL-divergence, subject to the constraint that the induced trajectories $s_0, s_1, s_2, \ldots$ satisfy $\phi$. Equivalently, the automaton state sequence $q_0, q_1, \ldots$ induced in the product MDP $\mathcal{M}_\mathcal{A_\phi}$ intersects infinitely often with the accepting states $F$. This optimal policy has a closed-form solution given in Equation~\ref{eqn:factorization} \citep{levine2018reinforcementlearningcontrolprobabilistic}, whose derivation we provide in Appendix~\ref{app:derivation-of-optimal-policy}.
\begin{equation}
    \label{eqn:factorization}
    \hat{\pi}(a_{t:t+H} | s_t, \phi) \propto \pi(a_{t:t+H} | s_t) P(\phi | s_t, q_t, a_{t:t+H})
\end{equation}
In Equation~\ref{eqn:factorization}, $P(\phi \mid s_t, q_t, a_{t:t+H})$ denotes the probability that, after executing chunk $a_{t:t+H}$ from state $\langle s_t, q_t \rangle$ and continuing under $\pi$ thereafter, the resulting trajectory satisfies $\phi$. Based on this factorization, sampling from $\hat{\pi}$ reduces to estimating and incorporating $P(\phi \mid s_t, q_t, a_{t:t+H})$, or its score function, when sampling actions from $\pi$ \citep{bansal2024universal}.

\vspace{-2mm}
\section{\texorpdfstring{\method}{hint2}}
\vspace{-2mm}

In this section, we present \method{}, a method for guiding pretrained diffusion policies toward \ac{LTL} satisfaction at inference time. As shown in Equation~\ref{eqn:factorization}, the term $P(\phi \mid s_t, q_t, a_{t:t+H})$ captures how the current action chunk affects future satisfaction of the specification $\phi$, and is therefore the central quantity needed for \ac{LTL} guidance. Directly modeling this probability is infeasible, since satisfaction depends on the entire future trajectory induced by the policy and long-horizon prediction quickly suffers from compounding error. \method{} addresses this by approximating the score $\nabla \log P(\phi \mid s_t, q_t, a_{t:t+H})$ with the gradients of two guidance objectives at different abstraction levels.

\textbf{High-Level Guidance.}
The natural abstraction for long-horizon \ac{LTL} guidance is given by the atomic propositions underlying $\phi$. Crucially, proposition values evolve discretely, so repeated labels can be collapsed into single elements that represent long segments of the underlying trajectory. 
At inference time, this abstraction enables the specification of arbitrary \ac{LTL} formulas over the same atomic propositions.
For a broad class of such specifications, this abstraction is exact: removing duplicated labels has \emph{no impact} on satisfaction for formulas that are ``stutter-invariant'' with respect to the feasible labels in the environment~\citep{peled1997stutterinvariance}.

\begin{definition}
    \label{definition:env-stutter-invariance}
    For an \ac{MDP} $\mathcal{M}$, atomic proposition set $AP$, and labeling function $L : S \to 2^{AP}$, let $\Sigma_{\mathcal{M}} = \{ L(s) \mid s \in S \} \subseteq 2^{AP}$ be the set of labels reachable in $\mathcal{M}$. Let $\mathcal{A}_\varphi = (Q, 2^{AP}, \delta, q_0, F)$ be the \ac{DBA} for an \ac{LTL} formula $\phi$. We say $\phi$ is stutter invariant relative to $\mathcal{M}$ if \(
    \delta (\delta(q,\sigma), \sigma ) \;=\; \delta(q, \sigma)
    \) for all $q \in Q$ and $\sigma \in \Sigma_{\mathcal{M}}$.
\end{definition}
\vspace{-1mm}
Our \textit{high-level world model} $f^\mathrm{hi} : S \times A^H \to [0,1]^{N \times |AP|}$ predicts marginal per-proposition probability vectors $\ell_1, \ell_2, \ldots, \ell_N$ for the next $N$ distinct labels in the trajectory, given the current state and a short-horizon action chunk.
We show how this high-level world model can be used to exactly compute a distribution over future automaton states and derive a tractable guidance objective capable of steering action chunks toward progress through the automaton.

\begin{proposition}
\label{proposition:automaton-distribution}
Let $\phi$ be an \ac{LTL} formula with \ac{DBA} $\mathcal{A}_\phi = (Q, 2^{AP}, \delta, q_0, F)$ that is stutter-invariant relative to $\mathcal{M}$. Let $\ell_1, \ldots, \ell_N \in [0,1]^{|AP|}$ be a predicted sequence of proposition probabilities, with associated symbol probabilities (assuming labels are mutually independent) and stochastic transition matrices given in Equations~\ref{eqn:symbol-probability} and \ref{eqn:transition-matrix}.
\vspace{-4mm}

\begin{subequations}
    \small
    \begin{minipage}[t]{0.50\textwidth}
      \begin{equation}
        \label{eqn:symbol-probability}
        P_k(\sigma) = \prod_{a \in \sigma} \ell_k(a) \prod_{a \in AP \setminus \sigma} (1 - \ell_k(a))
      \end{equation}
    \end{minipage}
    \hfill
    \begin{minipage}[t]{0.48\textwidth}
      \begin{equation}
        \label{eqn:transition-matrix}
        M_k(q, q') = \sum_{\substack{\sigma \in \Sigma_\mathcal{M} \; : \;  \delta(q,\sigma) = q'}} P_k(\sigma)
      \end{equation}
    \end{minipage}
\end{subequations}
Then for any initial automaton state $q_t \in Q$ and any segment durations $T_1, \ldots, T_N \geq 1$, the distribution over automaton states after $N$ label segments is independent of the segment durations $T_1, \ldots, T_N$ and given exactly by 
$\alpha_N = M_N M_{N-1} \cdots M_1\, \alpha_0$,  where $\alpha_0$ is the unit vector for $q_t$.
\vspace{-3mm}
\begin{proofsketch}
Stutter-invariance (Definition~\ref{definition:env-stutter-invariance}) implies that for any $\sigma \in \Sigma_\mathcal{M}$, repeating label $\sigma$
for $T_k \geq 1$ consecutive steps yields the same automaton state as observing it once, since $\delta(\delta(q, \sigma), \sigma) = \delta(q, \sigma)$ for all $q \in Q$. The automaton state after segment $k$ therefore depends only on the segment label, not its duration. Assuming propositions are independent at each step, $P_k(\sigma)$ in Equation~\ref{eqn:symbol-probability} is the exact probability of observing symbol $\sigma$ at segment $k$. The distribution over automaton states then evolves by $\alpha_k^T = \alpha_{k-1}^T M_k$, and chaining across all $N$ segments gives the result.
\end{proofsketch}
\end{proposition}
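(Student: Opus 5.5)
We argue in two stages. First, a deterministic, pathwise statement about the automaton that uses only Definition~\ref{definition:env-stutter-invariance}. Second, a probabilistic statement that propagates the distribution through the stochastic matrices $M_k$ of Equation~\ref{eqn:transition-matrix}.

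\textbf{Stage 1: segment durations do not matter.} Fix any realized label sequence that consists of $N$ segments. Segment $k$ repeats a label $\sigma_k \in \Sigma_{\mathcal{M}}$ exactly $T_k \geq 1$ times. Extend $\delta$ to words in the usual way.

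We first show by induction on $T \geq 1$ that $\delta(q, \sigma^T) = \delta(q,\sigma)$ for every $q \in Q$ and $\sigma \in \Sigma_{\mathcal{M}}$. The base case $T=1$ is trivial. For the step,
\[
\delta(q,\sigma^{T+1}) = \delta(\delta(q,\sigma^T),\sigma) = \delta(\delta(q,\sigma),\sigma) = \delta(q,\sigma),
\]
where the middle equality is the induction hypothesis and the last is stutter invariance.

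A second induction, on the segment index $k$, then shows that the automaton state at the end of segment $k$ equals $q_k := \delta(q_{k-1}, \sigma_k)$ with $q_0 = q_t$. This does not depend on $T_1,\dots,T_k$. Consequently the state after $N$ segments is a fixed function $g(q_t,\sigma_1,\ldots,\sigma_N)$ of the segment labels alone.

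\textbf{Stage 2: propagating the distribution.} We interpret $\ell_k$ as the model's marginal for segment $k$. Under the stated independence assumption, $\sigma_k$ is distributed according to $P_k$ in Equation~\ref{eqn:symbol-probability}, with the $\sigma_k$ mutually independent across $k$. This is the modeling assumption the statement encodes.

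We also need $P_k$ to be supported on $\Sigma_{\mathcal{M}}$, so that the sum in Equation~\ref{eqn:transition-matrix} loses no mass and each $M_k$ is column-stochastic. We take this as part of the hypothesis that the $\ell_k$ describe feasible labels. Alternatively, we can read $M_k$ as the transition operator restricted to feasible symbols and note the identity holds for the realized distribution. I expect this support issue to be the main obstacle: it is exactly the gap between ``product of Bernoullis over $2^{AP}$'' and ``sum over $\Sigma_{\mathcal{M}}$''. I would state it explicitly, for instance by assuming $P_k(\sigma)=0$ for $\sigma\notin\Sigma_{\mathcal{M}}$ or by renormalizing.

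With that in place, induct on $k$ with $\alpha_k(q') := \Pr[q_k = q']$. The base case $\alpha_0 = e_{q_t}$ holds by definition. For the step, $\sigma_k$ is independent of $q_{k-1}$, because $q_{k-1}$ is a function of $\sigma_1,\ldots,\sigma_{k-1}$ by Stage 1. Hence
\[
\alpha_k(q') = \sum_{q}\alpha_{k-1}(q)\sum_{\sigma\in\Sigma_{\mathcal{M}}:\,\delta(q,\sigma)=q'}P_k(\sigma) = (M_k\alpha_{k-1})(q').
\]
Iterating this recursion gives $\alpha_N = M_N\cdots M_1\alpha_0$.

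Independence from $T_1,\ldots,T_N$ holds because $q_N$ is determined by $(\sigma_1,\ldots,\sigma_N)$ alone (Stage 1), and the law of these labels does not involve the $T_k$.

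Two smaller points. Consecutive distinct segments have $\sigma_k\neq\sigma_{k+1}$ in reality, but the argument never needs this: equal consecutive labels are also absorbed by stutter invariance. The matrix convention (columns indexed by source state) has to match the paper's $M_k(q,q')$ indexing, which reads as a transpose. That is only a bookkeeping choice and amounts to the row form $\alpha_k^T=\alpha_{k-1}^T M_k$.
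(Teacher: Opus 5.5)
Your proof is correct and follows the paper's own argument. It uses induction on $T$ via stutter invariance to show that only the segment labels determine the automaton state, then the law of total probability to obtain $\alpha_k = M_k\alpha_{k-1}$ (in the row convention $\alpha_k^T=\alpha_{k-1}^T M_k$), chained over the $N$ segments. Your explicit treatment of three points goes beyond the paper's proof, which leaves them implicit: the support of $P_k$ on $\Sigma_{\mathcal{M}}$, so that $M_k$ is genuinely stochastic; the independence of $\sigma_k$ from earlier segments; and the transpose in the indexing of $M_k$.
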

\vspace{-2mm}
Prior work has shown that satisfaction of $\phi$ can be ensured by repeatedly enforcing progress toward the accepting states $F$ over a receding horizon, but remains limited to finite discrete systems~\citep{ding2014ltlrecedinghorizoncontrol}. 
We extend this idea to enable LTL guidance of learned policies in stochastic settings by repeatedly maximizing the \emph{expected cumulative automaton potential} $\sum_{k=1}^{N} v^\top \alpha_k$ over the high-level world model horizon.
This is made possible using the intermediate automaton-state distributions $\alpha_1,\ldots,\alpha_N$, computed by the recursion $\alpha_k = M_k \alpha_{k-1}$ from Proposition~\ref{proposition:automaton-distribution}, which we prove in Appendix~\ref{app:proof-automaton-distribution}.
Using the automaton potential vector $v$ (Section~\ref{sec:automata-preliminaries}), the cumulative potential provides guidance for any horizon $N$ as long as there is any expected automaton progress induced by the high-level world model prediction.
By backpropagating the cumulative potential through the high-level world model $f^\mathrm{hi}(s_t,a_{t:t+H})$ with respect to $a_{t:t+H}$, we obtain a tractable signal that can guide a short-horizon action chunk toward long-horizon LTL satisfaction, given by the high-level term in Equation~\ref{eqn:total-score}.

\textbf{Low-Level Guidance.} 
While the high-level world model enables guidance for satisfying entire \ac{LTL} specifications, including the long-term obligations they express, it does so at the coarse abstraction level of the labels. Their boolean semantics provide a less expressive guidance signal that does not admit an adjustable constraint threshold, which is particularly desirable for hard safety constraint satisfaction. We address this by deriving a second guidance term directly targeting the safety component $\phi_s$ isolated from $\phi$ by the decomposition of \ac{LTL} into safety and liveness components \citep{alpern1987safetyliveness}. As safety constraints correspond to avoiding bad prefixes, the robustness (Section~\ref{sec:tl}) of short-horizon trajectories induced by $a_{t:t+H}$ with respect to $\phi_s$ provides a dense, adjustable guidance signal. We use a \textit{low-level world model} $f^{\mathrm{lo}} : S \times A \to S$ to predict the trajectory $\tau = (s_t, s_{t+1}, \ldots, s_{t+H})$ over which we evaluate the robustness $\rho(\tau, \phi_s)$. The gradient of this robustness with respect to $a_{t:t+H}$, scaled by $\lambda > 0$, yields the low-level guidance signal for enforcing safety in Equation~\ref{eqn:total-score}.
{\small
\begin{equation}
    \label{eqn:total-score}
    \nabla \log P(\phi \mid s_t, q_t, a_{t:t+H}) \;\approx\;
    \underbrace{
        \nabla \log \sum\nolimits_{k=1}^{N} v^T \alpha_k
    }_{\text{High-Level}} \; + \;
    \underbrace{
        \lambda \nabla \rho(f^{\textrm{lo}}(s_t, a_{t:t+H}), \phi_s)
    }_{\text{Low-Level}}
\end{equation}
}

\vspace{-4mm}
\section{Experimental Results}
\vspace{-2mm}
\label{sec:result}

We evaluate \texttt{hint$^2$} across a series of experiments designed to match 
the settings of existing inference-time steering methods for diffusion policies, 
while progressively increasing the complexity of both the domain and the specifications. We first
isolate long-horizon LTL guidance in a 2D domain, then evaluate 
guidance in a manipulation environment, and finally train a real-world diffusion policy 
to demonstrate that the guidance capabilities of \texttt{hint$^2$} transfer beyond 
simulation.

\vspace{-0.6em}
\subsection{Temporal-Logic Guided Diffusion}
\vspace{-0.4em}

In our first experiment, we compare against LTLDoG~\citep{feng2024ltldog} and TeLoGraF~\citep{meng2025telograf}, prior methods that aim to execute general LTL specifications.
We conduct this comparison in the 2D Toy Squares domain shown in Fig.~\ref{fig:results_toy_squares}, where a point agent moves between four colored target regions inducing the propositions \texttt{at\_green}, \texttt{at\_red}, \texttt{at\_yellow}, and \texttt{at\_blue}. 
All methods are trained on the same set of demonstrations, in which the agent reaches each target region from a randomized initial state.
\method{} uses an unconditioned multimodal diffusion policy that predicts 8-step action chunks, while the baselines generate state-action trajectories of length 64 for TeLoGraF, and 128 for LTLDoG. 
For both baselines, we report \textit{planning} satisfaction, measured on the generated state trajectory, and \textit{rollout} satisfaction, measured after executing the corresponding actions in the environment. 
We evaluate each method on LTL specifications with increasing \textit{automaton distance} from the initial state to an accepting state. 
An automaton distance of one corresponds to $F\,\texttt{at\_blue}$; each additional unit appends one eventual goal from the sequence \texttt{at\_yellow}, \texttt{at\_green}, \texttt{at\_red}, \texttt{at\_blue}.

Fig.~\ref{fig:results_toy_squares} exposes limitations of both baselines. Since the training data contains only single-eventuality demonstrations, TeLoGraF is limited to automaton distance one. Collecting demonstrations for longer specifications becomes intractable, as this scales combinatorially with task horizon. LTLDoG faces a similar issue as satisfying longer LTL formulas requires behavior far beyond the single-target trajectories seen during training, causing its performance to degrade quickly. Beyond suffering from limited training data, full-trajectory generation with a fixed horizon is poorly matched to general LTL guidance, where the required rollout length varies with formula complexity.
\texttt{hint$^2$} overcomes these limitations and achieves $100\%$ satisfaction across all automaton distances. Rather than generating a satisfying trajectory in one shot, our method repeatedly uses the high-level world model to select 8-step action chunks that progress toward the accepting state of the automaton. This keeps action sampling short-horizon while preserving long-horizon temporal progress, allowing \texttt{hint$^2$} to compose behaviors that were never demonstrated as complete trajectories. Appendix~\ref{app:toy_ltl_specs} shows that \texttt{hint$^2$} also handles richer liveness patterns and safety constraints in this domain.

\begin{figure}[t]
    \centering
    \includegraphics[width=\linewidth]{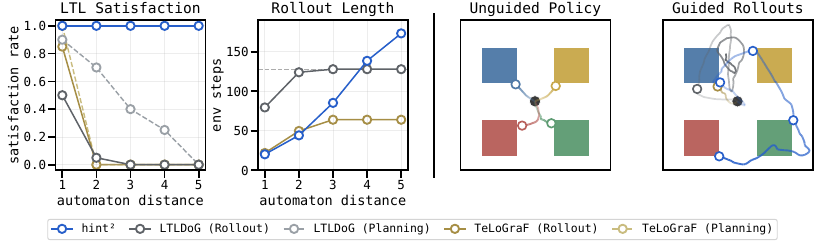}
    \vspace{-1.6em}
    \caption{
\textbf{Comparison of LTL-guided diffusion methods.} 
\method{} maintains $100\%$ LTL satisfaction across increasing automaton distances by steering a short-horizon diffusion policy, while the full-trajectory generation used by baselines becomes unreliable and horizon-limited.
}
\vspace{-1.25em}
    \label{fig:results_toy_squares}
\end{figure}

This comparison shows how \texttt{hint$^2$} is capable of handling general LTL 
specifications much more naturally than existing methods. Since our method simply 
guides the next action chunk of a diffusion policy, it is easily deployable in complex domains. Existing LTL guidance approaches no longer serve as 
practical baselines in this setting, as the data required for general \ac{LTL} satisfaction within a single trajectory is infeasible to collect.

\vspace{-0.85em}
\subsection{Guidance in a Complex Environment}
\vspace{-0.55em}

Our second experiment examines the capabilities of \method{} in the CALVIN 
environment \citep{mees2022calvin}. In this setup, a robot interacts with elements of a tabletop 
scene, including a button, a switch, a drawer, and a sliding door. We train an 
unconditioned diffusion policy
on six hours of human play data in the CALVIN-D dataset. As depicted in Fig.~\ref{fig:results_highlights}, this policy may 
proceed toward any of the learned behaviors.
For the high-level world model, we 
define the label state over the propositions derived from the behaviors \texttt{button\_on}, 
\texttt{button\_off}, \texttt{switch\_on}, \texttt{switch\_off}, 
\texttt{drawer\_open}, \texttt{drawer\_close}, \texttt{door\_left}, and 
\texttt{door\_right}. Further implementation details are provided in Appendix~\ref{app:calvin_impl}.

\textbf{Inference-time Goal Selection.}
We first compare \method{} to existing methods that steer unconditioned diffusion policies toward simpler objectives than full LTL specifications.
\textit{DynaGuide} \citep{du2025dynaguide} steers the policy toward a goal observation using a learned latent
dynamics model, while \textit{ITPS} \citep{wang2025inference} uses a 3D position specified at 
runtime. In CALVIN, both objectives can be used to select individual tabletop 
behaviors by steering toward the final observation or end-effector position 
associated with the desired behavior. Such goal selection corresponds to a small subset 
of our LTL guidance, namely single-eventuality formulas such 
as $F\,\texttt{button\_on}$. We therefore compare all inference-time diffusion steering methods on 
single-behavior steering before moving to richer LTL specifications.

\begin{figure}[b]
    \centering
        \vspace{-1.1em}  
    \includegraphics[width=\linewidth]{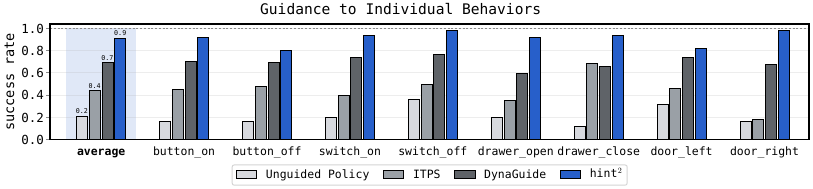}
    \vspace{-1.9em}
\caption{
\textbf{Inference-time steering for single CALVIN behaviors.}
By using single-eventuality LTL objectives, \method{} consistently selects desired behaviors from an unconditioned multimodal diffusion policy, outperforming DynaGuide and ITPS in their intended setting.
} 
    \label{fig:results_calvin_single}
\end{figure}

Fig.~\ref{fig:results_calvin_single} shows that \method{} substantially improves
single-behavior selection. Across the eight evaluated behaviors,
\method{} consistently selects the desired behavior from the base policy, achieving $91\%$ average success and outperforming both baselines. 
This improvement comes from the high-level world model learning which action chunks are likely to 
lead to each future behavior, allowing guidance to reliably bias sampling toward the desired outcome. 
In addition to improved guidance, \method{} also uses a more accessible 
runtime interface. Goal images or hand-specified 3D positions can be inconvenient 
to provide at runtime, whereas symbolic propositions can be specified directly by 
name without requiring an external grounding input. More importantly, LTL can express objectives that go far beyond selecting one behavior, including
complex instructions and safety constraints. Goal-image and
position-based steering do not provide an interface for specifying such structure,
so we next evaluate \method{} on richer LTL specifications beyond the scope of
these baselines.

\begin{figure}[b]
    \centering
    \vspace{-0.9em}
    \includegraphics[width=1\linewidth]{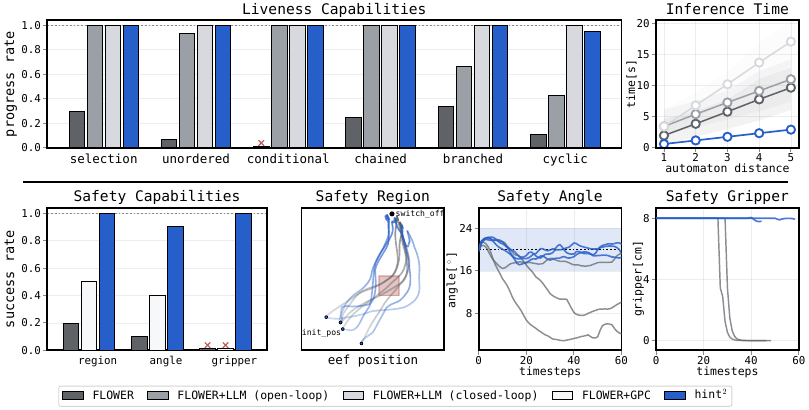}
    \vspace{-1.1em}
\caption{
\textbf{Comparison on expressive temporal objectives.}
\method{} enables complex inference-time LTL guidance in a high-dimensional manipulation domain, satisfying long-horizon liveness and runtime safety specifications beyond the native capabilities of language-conditioned baselines.
}
    \label{fig:result_ltl_guidance}
\end{figure}

\textbf{Expressive Guidance Capabilities.} 
We now investigate the broader promise of \method{}: executing complex instructions specified at runtime. Since language-conditioned robot policies have made the most progress toward this goal and use a comparable interface, we compare \method{} against \textit{FLOWER}~\citep{reuss2025flower}, a leading VLA on CALVIN, as well as variants augmented with an LLM planner or GPC-based~\citep{qi2026gpc} safety guidance.
We test both approaches on a variety of complex instructions, including \texttt{selection} among multiple behaviors, \texttt{unordered}, \texttt{conditional}, \texttt{branched}, and \texttt{chained} execution of task sequences, \texttt{cyclic} repetition, as well as safety-constrained variants such as completing a behavior while avoiding an unsafe \texttt{region}, maintaining a desired end-effector \texttt{angle}, or keeping the \texttt{gripper} open. For each, we derive an LTL specification and a corresponding language command, e.g., $\mathbf{F}\,\texttt{drawer\_close} \wedge \mathbf{G}\,\texttt{gripper\_open}$ and ``Close the drawer while keeping the gripper open.'' The full list of instructions is provided in Appendix~\ref{app:calvin-experiments}.

Fig.~\ref{fig:result_ltl_guidance} shows that \method{} achieves near-perfect success across all evaluated specifications.
\textbf{Liveness:}
While FLOWER reliably executes individual CALVIN behaviors, it is not capable of completing longer language instructions. This is because the training data contains only primitive commands and learning does not extrapolate to longer-horizon compositions.
Adding an
open-loop LLM planner improves performance by decomposing the language command
into primitive CALVIN skills. For \texttt{branched} and \texttt{cyclic}, this approach still falls short as they
require tracking progress through the instruction rather than following a fixed
one-shot plan, which highlights the necessity of action selection based on the execution history. 
Providing the LLM with closed-loop access to the completion history overcomes that problem and achieves the same performance as \method{}. However, this leads to substantially higher inference cost.
By contrast, \method{} obtains the same progress-tracking behavior directly from
the automaton state and lightweight label world model.
\textbf{Safety:} 
Language-conditioned policies lack a reliable mechanism to enforce runtime safety constraints not represented during training, so FLOWER completes the task without accounting for the constraint. The GPC variant improves safety slightly by sampling multiple action chunks and ranking them using our low-level robustness score.
\method{} instead enforces safety through the low-level component of its guidance score, actively optimizing action chunks away from predicted violations rather than passively selecting the best among sampled candidates. 
Together, these results demonstrate how \method{} can elegantly complete challenging inference-time instructions with both long-horizon liveness and safety constraints by handling each aspect at its natural abstraction level.

\vspace{-0.95em}
\subsection{Real-world Guidance}
\vspace{-0.75em}

\begin{figure}[t]
    \centering
    \includegraphics[width=1\linewidth]{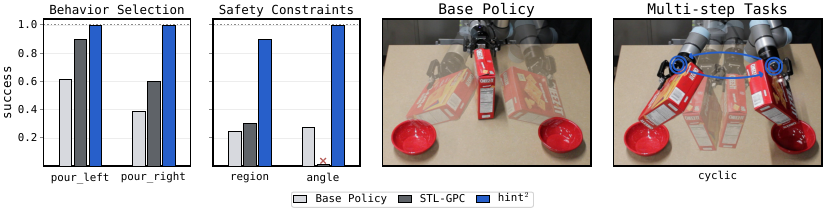}
    \vspace{-1.85em}
\caption{
\textbf{Real World Experiments.}
We show that \method{} can complete complex real-world instructions, including runtime safety constraints and cyclic behavior, and demonstrate that our guidance signal is fundamentally better matched to short-horizon policies than STL robustness alone.
}
\vspace{-1.35em}
    \label{fig:real_world_results}
\end{figure}

Finally, we evaluate whether \method{} extends beyond simulation and provides effective guidance in a real-world setting.
To obtain a multimodal behavior prior, we collect 130 demonstrations in which the robot picks up a box of Cheez-Its, pours into the left bowl, the right bowl, or both bowls, and then places the box back down.
We train the unconditioned diffusion policy and both world models on these demonstrations, yielding a policy that can freely transition between behaviors corresponding to the high-level propositions: \texttt{box\_grabbed}, \texttt{pour\_left}, and \texttt{pour\_right}.
We evaluate single-behavior mode selection with $F\,\texttt{pour\_left}$ and $F\,\texttt{pour\_right}$, as well as harder specifications that require avoiding an unsafe region (\texttt{region}), keeping the Cheez-Its upright until the robot is near the target bowl (\texttt{angle}), and repeatedly alternating between pouring left and right (\texttt{cyclic}). 
We compare \method{} against \textit{STL-GPC}, a variant of GPC \cite{qi2026gpc} that samples candidate action chunks and executes the one with the highest STL robustness under the learned world-model rollout.

The real-world results in Fig.~\ref{fig:real_world_results} show that \method{} successfully completes all evaluated inference-time instructions. 
STL-GPC improves mode selection, but does not provide the same guidance quality as \method{}, since committing to the mode that reaches the target bowl often happens before the short-horizon robustness signal becomes informative. On the safety tasks, STL-GPC can even perform worse than the base policy because the robustness value must simultaneously capture eventual goal selection and safety avoidance. This creates a brittle guidance signal and can even introduce conflicts, as in \texttt{angle}, where pouring robustness encourages tilting the box while the safety constraint requires keeping it upright until the robot approaches the bowl. \method{} overcomes these limitations by decomposing guidance across abstraction levels and using STL robustness only for local safety guidance, where it is most effective. This enables \method{} to execute challenging real-world instructions such as cyclic behavior and runtime safety constraints.

\vspace{-2.05mm}
\section{Conclusion}
\vspace{-2.05mm}
\label{sec:conclusion}
In this work, we present \method{} and show that it is possible to guide pretrained diffusion policies in high-dimensional domains to satisfy significantly more complex \ac{LTL} constraints than those supported by state-of-the-art techniques. Crucial to our method's success is the prediction of action-chunk consequences at two abstraction levels, in particular predicting at a higher abstraction level that both \textit{captures task-relevant state information} and \textit{evolves on a slower timescale}. We hope this result stimulates further research into world models beyond learning immediate action results and toward learning the dynamics of the world at various abstraction levels and timescales.

\textbf{Limitations and Future Work.}
Our method uses a fixed set of atomic propositions to obtain a suitable high-level abstraction. In future work, we plan to investigate methods that learn such discrete, slow-evolving features of the world automatically \citep{gumbsch2024learning}, and use predictions in these latent features to achieve an even more general form of guidance. For \ac{LTL} guidance specifically, we only show our method obtains an exact guidance signal for constraints that are stutter-invariant relative to the \ac{MDP} $\mathcal{M}$. We intend to explore supporting more general \ac{LTL} constraints, including those with non-deterministic automata representations, and extending our approach to multi-agent settings \citep{hsu2025hyprl}.

\section*{Acknowledgments}
This work was supported by a scholarship of the German Academic Exchange Service (DAAD).

\bibliography{references}

\clearpage
\appendix
\section{Derivation of Optimal Policy Factorization}
\label{app:derivation-of-optimal-policy}

In this section, we provide the derivation for the factored form of the LTL-guided policy $\hat\pi$ given in Equation~1 of our main paper. For this derivation, we formalize the constraint that trajectories $s_0, s_1, s_2, \ldots$ induced by $\hat\pi$ satisfy $\phi$ by constraining the logarithm of the probability of continuing to satisfy $\phi$ given any current state, automaton state, and action $\langle s, q, a \rangle$, in expectation over possible policies $\hat\pi$, to exceed some threshold $\epsilon \in (-\infty, 0]$. The specific value of epsilon can be chosen to enforce satisfaction with probability up to $1$, but the functional form of $\hat\pi$ does not depend on the specific value of $\epsilon$. The full optimization problem, with the objective of minimizing the KL-divergence from the base policy $\pi$ and the constraint that $\hat\pi$ remains a valid distribution, is given below.
\begin{align*}
    \min \quad &D_{\mathrm{KL}}(\hat{\pi} \| \pi) \\
    \text{subject to} \quad & \expectation_{\hat{\pi}}{ \left [ \log P(\phi \mid s, q, a) \right ]} \geq \epsilon \\
    & \int \hat{\pi}(a \mid s,q)\, da = 1
\end{align*}
The definition for KL-divergence is given in Equation~\ref{eqn:kldivergence}. 
\begin{equation}
    \label{eqn:kldivergence}
    D_{\mathrm{KL}}(\hat{\pi} \| \pi) = \int \hat\pi(a \mid s,q) \log \frac{\hat\pi(a \mid s,q)}{\pi(a \mid s,q)} da
\end{equation}
The Lagrangian for this optimization problem with multipliers $\beta \geq 0$ and $\lambda$ is given in Equation~\ref{eqn:lagrangian}.
{\small
\begin{equation}
\label{eqn:lagrangian}
\mathcal{L}(\hat\pi, \beta, \lambda)
= \int \hat\pi(a \mid s,q) \log \frac{\hat\pi(a \mid s,q)}{\pi(a \mid s,q)} da - \beta ( \expectation_{\hat{\pi}}{ \left [ \log P(\phi \mid s, q, a) \right ]} - \epsilon) - \lambda \left(\int \hat{\pi}(a \mid s, q)\, da - 1\right)
\end{equation}
}
We can derive the form of the solution by taking the derivative of the Lagrangian with respect to $\hat\pi$ and setting it to zero.
\begin{equation}
    \frac{\partial \mathcal{L}}{\partial\hat\pi(a \mid s,q)} = \log \frac{\hat \pi(a \mid s,q)}{\pi(a \mid s,q)} + 1 - \beta \log P(\phi \mid s, q, a) - \lambda = 0
\end{equation}
Solving for $\hat \pi$ yields Equation~\ref{eqn:hatpi}.
\begin{align}
    \log \hat\pi(a \mid s,q) &= \log \pi (a \mid s,q) + \beta \log P(\phi \mid s, q, a) + (\lambda - 1) \\
    \label{eqn:hatpi}
    \hat \pi (a \mid s,q) &= \pi(a \mid s,q) P(\phi \mid s, q, a)^\beta e^{\lambda - 1}
\end{align}
Therefore, using $\beta = 1$, the corresponding value of $\lambda$ determining the normalization constant, and noting that $\pi(a \mid s, q) = \pi(a \mid s)$ yields $\hat \pi (a \mid s,q) \propto \pi(a \mid s) P(\phi \mid s, q, a)$.

\section{Proof of Proposition~\ref{proposition:automaton-distribution}}
\label{app:proof-automaton-distribution}

In this section, we restate Proposition~1 from our main paper and provide its full proof. We also give additional support for our approach maximizing expected automaton potential to encourage satisfying the input LTL specification, by analogy to the automaton potential constraint used by \citet{ding2014ltlrecedinghorizoncontrol}.

\paragraph{Proposition~\ref{proposition:automaton-distribution} (restated).}
Let $\phi$ be an LTL formula with DBA $\mathcal{A}_\phi = (Q, 2^{AP}, \delta, q_0, F)$ that is stutter-invariant relative to $\mathcal{M}$. Let $\ell_1, \ldots, \ell_N \in [0,1]^{|AP|}$ be a predicted sequence of proposition probabilities, with associated symbol probabilities (assuming labels are mutually independent) and stochastic transition matrices given in Equations~\ref{app:eqn-symbol-probability} and \ref{app:eqn-transition-matrix}.
\vspace{-4mm}

\begin{subequations}
    \small
    \begin{minipage}[t]{0.50\textwidth}
      \begin{equation}
        \label{app:eqn-symbol-probability}
        P_k(\sigma) = \prod_{a \in \sigma} \ell_k(a) \prod_{a \in AP \setminus \sigma} (1 - \ell_k(a))
      \end{equation}
    \end{minipage}
    \hfill
    \begin{minipage}[t]{0.48\textwidth}
      \begin{equation}
        \label{app:eqn-transition-matrix}
        M_k(q, q') = \sum_{\substack{\sigma \in \Sigma_\mathcal{M} \; : \;  \delta(q,\sigma) = q'}} P_k(\sigma)
      \end{equation}
    \end{minipage}
\end{subequations}
Then for any initial automaton state $q_t \in Q$ and any segment durations $T_1, \ldots, T_N \geq 1$, the distribution over automaton states after $N$ label segments is independent of the segment durations $T_1, \ldots, T_N$ and given exactly by 
$\alpha_N = M_N M_{N-1} \cdots M_1\, \alpha_0$,  where $\alpha_0$ is the unit vector for $q_t$.
\vspace{-3mm}
\begin{proof}

First, we claim that for any automaton state $q \in Q$, any label $\sigma \in \Sigma_\mathcal{M}$, and any $T \geq 1$, the automaton state after observing $\sigma$ repeated $T$ times is the same as after observing $\sigma$ once. Denoting the $T$-fold application of $\delta$ with a repeated label $\sigma$ by $\delta^T(q, \sigma)$, this can be written as $\delta^T(q, \sigma) = \delta(q, \sigma)$ for all $T \geq 1$. We prove this by induction on $T$. The base case when $T = 1$ is immediate. The induction step, following from Definition~1 in our main paper, is shown in Equation~\ref{eqn:induction-step}.
\begin{equation}
    \label{eqn:induction-step}
    \delta^{T}(q, \sigma) = \delta(q, \sigma) \implies
    \delta^{T+1}(q, \sigma) = \delta(\delta^{T}(q, \sigma), \sigma) = \delta(\delta(q, \sigma), \sigma) = \delta(q, \sigma)
\end{equation}
By induction, the automaton state after segment $k$ is entirely determined by the segment label $\sigma_k$, regardless of its duration $T_k \geq 1$.

Second, we show $\alpha_k = M_k \alpha_{k-1}$. Let $q_k$ denote the automaton state after segment $k$, and let $\alpha_k \in [0,1]^{|Q|}$ denote its distribution. For any $q' \in Q$, the probability of $q_k$ being $q'$ can be related to the probability of each automaton state at the previous iteration (from $\alpha_{k-1}$) by the law of total probability.
\begin{align}
P(q_k = q') &= \sum_{q \in Q} P(q_k = q' \mid q_{k-1} = q) \cdot P(q_{k-1} = q) \\
\label{eqn:alpha-element}
&= \sum_{q \in Q} P(q_k = q' \mid q_{k-1} = q) \cdot (\alpha_{k-1})_q
\end{align}

The probability of transitioning to $q'$ coming from $q_{k-1}$ is the probability $P_k(\sigma)$ of observing a label $\sigma \in \Sigma_\mathcal{M}$ at step $k$ satisfying $q' = \delta(q_{k-1}, \sigma)$. Therefore, we can rewrite $P(q_k = q' \mid q_{k-1} = q)$ as in Equation~\ref{eqn:matrix-element}, which gives each element for the matrix $M_k$.
\begin{equation}
    \label{eqn:matrix-element}
    P(q_k = q' \mid q_{k-1} = q) = \sum_{\substack{\sigma \in \Sigma_\mathcal{M} \\ \delta(q, \sigma) = q'}} P_k(\sigma) = M_k(q, q')
\end{equation}
Under the assumption that propositions are mutually independent at each step, $P_k(\sigma)$ factorizes as given in Equation~\ref{app:eqn-symbol-probability}, which is the exact marginal probability of each symbol under the label probabilities $\ell_k$. Substituting Equation~\ref{eqn:matrix-element} into Equation~\ref{eqn:alpha-element} and rewriting in vector form yields the recursion $\alpha_k = M_k \alpha_{k-1}$.

Finally, applying the recursion across all steps $k = 1, \ldots, N$ gives $\alpha_N = M_N M_{N-1} \cdots M_1 \alpha_0$ and initializing with the unit vector for $q_t$ ($\alpha_0 = e_{q_t}$) gives the result.
\end{proof}

\subsection{Justification for Maximizing Expected Automaton Potential}

We note that our guidance signal, based on maximizing the expected cumulative automaton potential over the high-level prediction horizon, can be seen as a stochastic extension of the constrained receding horizon controller proposed by \citet{ding2014ltlrecedinghorizoncontrol} for deterministic finite systems. Their controller optimizes a reward over a finite horizon in a product MDP while (when not at an accepting automaton state) enforcing a constraint that closeness to automaton accepting states increases between iterations. With our potential function, this constraint can be written as $v_{q_{N|k}} \geq v_{q_{N|k-1}}$, where $q_{N|k}$ denotes the automaton state after $N$ distinct labels for iteration $k$. This constraint guarantees repeated visits to $F$ and therefore satisfaction of $\phi$ \citep{ding2014ltlrecedinghorizoncontrol}. Enforcing this constraint directly is not possible in our stochastic setting for two reasons. First, we have access only to a stochastic approximation of the automaton-level dynamics, so the terminal automaton state $q_{N|k}$ is a random variable rather than a deterministic quantity. Second, even replacing the hard constraint with its expectation $\mathbb{E}[v_{q_{N|k}}] \geq v_{q_{N|k-1}}$ may yield an infeasible problem: if all action chunks assign low but nonzero probability to progress-making transitions, the expected terminal potential may fall below $v_{q_{N|k-1}}$ for every available chunk, despite some probability of progress existing. We therefore replace the hard progress constraint with a soft maximization objective. Maximizing $\sum_{k=1}^N v^T \alpha_k$ encourages persistent increase in $\mathbb{E}[v_{q_{N|k}}]$ across iterations in the same spirit as the Ding et al. constraint, while remaining feasible in all cases. Furthermore, optimizing the cumulative potential rather than the terminal potential $v^T \alpha_N$ alone induces a preference for shorter paths to $F$: two action chunks reaching $F$ at steps $k_1 < k_2$ respectively contribute $v^T \alpha_{k_{1:N}} > v^T \alpha_{k_{2:N}}$ to the sum, so earlier progress is rewarded.

\section{Toy Squares - Complex Specification}
\label{app:toy_ltl_specs}

To verify that the Toy Squares results are not limited to simple ordered reachability, we additionally evaluate a richer LTL specification combining unordered liveness, branching, sequencing, and safety:

\begin{align}
\phi_{\mathrm{rich}} =
&\mathbf{F}\Big(
(\mathbf{F}\,\texttt{red} \wedge \mathbf{F}\,\texttt{blue})
\wedge
\mathbf{F}\Big[
\mathbf{F}\big(\texttt{yellow} \wedge
\mathbf{F}(\texttt{blue} \wedge
\mathbf{F}(\texttt{green} \wedge
\mathbf{F}\,\texttt{yellow}))\big)
\nonumber\\
&\hspace{3.8cm}\vee\;
\mathbf{F}\big(\texttt{green} \wedge
\mathbf{F}(\texttt{blue} \wedge
\mathbf{F}(\texttt{yellow} \wedge
\mathbf{F}\,\texttt{green}))\big)
\Big]
\Big)
\nonumber\\
&\wedge\; \mathbf{G}\neg\texttt{unsafe\_regions}.
\end{align}
This specification first requires the agent to reach red and blue in either order. It then branches between two possible sequences: yellow, blue, green, yellow, or green, blue, yellow, green. Finally, the full task must be completed while always avoiding the unsafe regions. This combines the main temporal concepts evaluated throughout the paper, and \method{} achieves $100\%$ satisfaction, demonstrating that the hierarchical guidance confidently handles complex combinations of liveness and safety constraints in the Toy Squares domain. Figure~\ref{fig:toy_rich_ltl} shows a representative rollout.

\begin{figure}[h]
    \centering
    \includegraphics[width=4in]{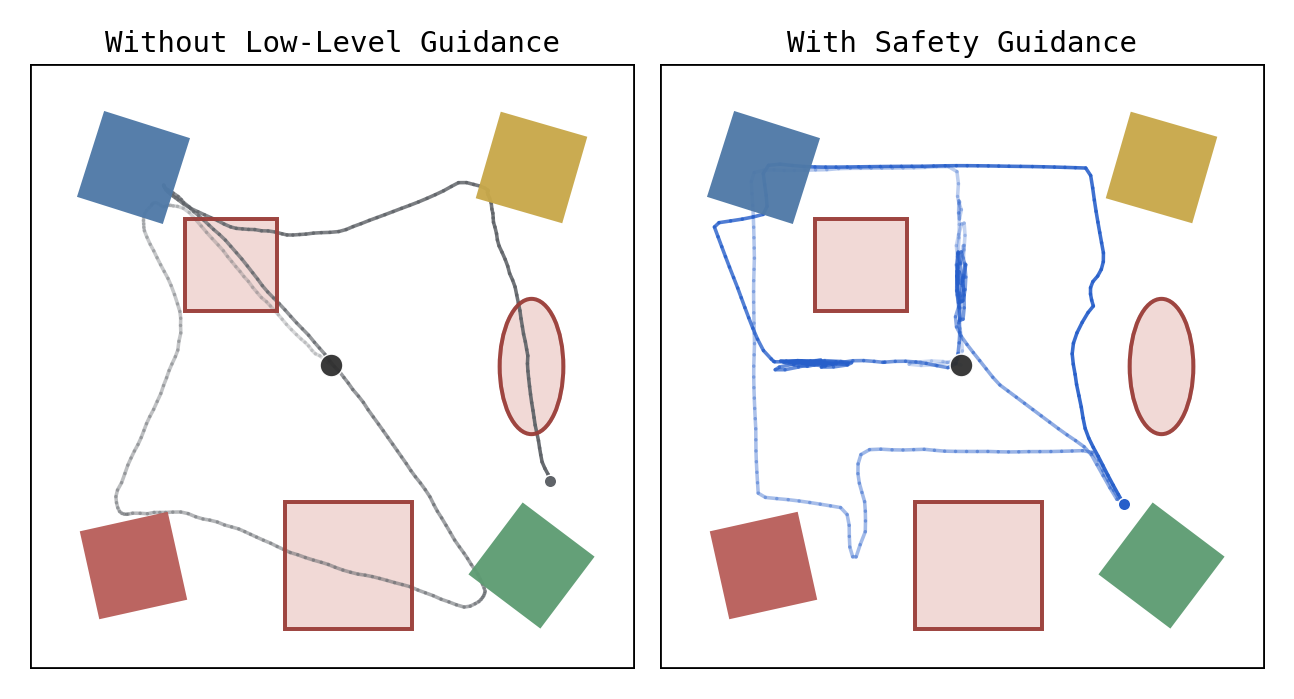}
    \caption{
    \textbf{Additional Toy Squares specification.}
    \method{} satisfies a complex LTL task combining unordered and ordered sequencing, branching, and safety constraints.
    }
    \label{fig:toy_rich_ltl}
\end{figure}

\section{CALVIN Implementation}
\label{app:calvin_impl}

\subsection{Model Training}

For this set of experiments, we adapt the diffusion-policy implementation from robomimic~\cite{robomimic2021} to the CALVIN~\cite{mees2022calvin} environment, following the setup used by DynaGuide~\cite{du2025dynaguide}. The policy observes proprioception together with wrist and third-person RGB images, and predicts action trajectories with prediction horizon 16 and execution horizon 8. Our world models operate on low-dimensional robot and scene state. All models are trained solely on the CALVIN-D training dataset.

The high-level world model takes the current robot state, scene state, current binary label vector, and an 8-step action chunk as input, and predicts the next label probability vector with prediction horizon \(N=1\). Labels are generated from the CALVIN scene state based on thresholds provided by the environment. The label set consists of \texttt{switch\_on}, \texttt{switch\_off}, \texttt{button\_pressed}, \texttt{drawer\_open}, \texttt{drawer\_close}, \texttt{door\_left}, and \texttt{door\_right}. The model uses separate MLP encoders for the state, action chunk, and label vector, concatenates the resulting embeddings, and predicts the future binary label vector. We train the high-level world model with binary cross-entropy loss using Adam.

The low-level dynamics world model is a single-step transition model over the robot and scene state. It takes the current state and a single 7D action as input and predicts the normalized state delta. For rotations, we use a continuous 6D representation. The dynamics model is an MLP with hidden size 512, depth 4, and dropout 0.02. We train the dynamics model with mean-squared error loss on normalized state deltas using Adam.

\subsection{Experiments}
\label{app:calvin-experiments}

For the individual guidance experiment, we replicate the articulated-object evaluation setup from DynaGuide, using the same task set, reset configurations, initial-state sampling, and randomization procedure. The tasks \texttt{button\_on} and \texttt{button\_off} are both represented using the label \texttt{button\_pressed}, as they use the same underlying movement.
\begin{table*}[h]
\centering
\scriptsize
\setlength{\tabcolsep}{3pt}
\renewcommand{\arraystretch}{1.15}
\begin{tabularx}{\textwidth}{l L{0.43\textwidth} Y}
\toprule
\textbf{Task} & \textbf{TL specification} & \textbf{Language command} \\
\midrule

\texttt{selection} &
$\mathbf{F}(
\texttt{door\_left} \vee
\texttt{switch\_off} \vee
\texttt{button\_pressed})$ &
Move the sliding door to the left, turn off the lightbulb using the switch, or press the button. \\

\texttt{unordered} &
$\mathbf{F}\,\texttt{door\_left}
\wedge
\mathbf{F}\,\texttt{switch\_off}
\wedge
\mathbf{F}\,\texttt{button\_pressed}$ &
Move the sliding door to the left, turn off the lightbulb using the switch, and press the button in any order. \\

\texttt{conditional} &
$\mathbf{F}\,\texttt{drawer\_close}
\wedge
(\neg \texttt{drawer\_close} \ \mathbf{U}\ \texttt{button\_pressed})
\wedge
(\neg \texttt{drawer\_close} \ \mathbf{U}\ \texttt{switch\_off})$ &
Close the drawer, but press the button and turn off the lightbulb using the switch before that. \\

\texttt{chained} &
$\mathbf{F}\big(
\texttt{button\_pressed}
\wedge \mathbf{X}\mathbf{F}(
\texttt{door\_right}
\wedge \mathbf{X}\mathbf{F}(
\texttt{switch\_off}
\wedge \mathbf{X}\mathbf{F}\,
\texttt{drawer\_close}))\big)$ &
Press the button, then move the sliding door to the right, then turn off the lightbulb using the switch, then close the drawer. \\

\texttt{branched} &
$\mathbf{F}\big(
\texttt{button\_pressed}
\wedge \mathbf{X}\mathbf{F}(
\texttt{drawer\_close}
\wedge \mathbf{X}\mathbf{F}\,
\texttt{switch\_off})\big)
\vee
\mathbf{F}\big(
\texttt{switch\_off}
\wedge \mathbf{X}\mathbf{F}(
\texttt{drawer\_close}
\wedge \mathbf{X}\mathbf{F}\,
\texttt{button\_pressed})\big)$ &
Press the button or turn off the lightbulb using the switch, then close the drawer, then do the remaining option. \\

\texttt{cyclic} &
$\mathbf{G}\mathbf{F}\big(
\texttt{drawer\_open}
\wedge \mathbf{X}\mathbf{F}(
\texttt{switch\_on}
\wedge \mathbf{X}\mathbf{F}(
\texttt{drawer\_close}
\wedge \mathbf{X}\mathbf{F}\,
\texttt{switch\_off}))\big)$ &
Repeatedly open the drawer, turn on the lightbulb using the switch, close the drawer, and turn off the lightbulb using the switch in order. \\

\texttt{region} &
$\mathbf{F}\,\texttt{switch\_off}
\wedge
\mathbf{G}\neg\texttt{unsafe\_region}$ &
Turn off the lightbulb using the switch while keeping the robot arm out of the unsafe region. \\

\texttt{angle} &
$\mathbf{F}\,\texttt{door\_right}
\wedge
\mathbf{G}\,\texttt{tcp\_tilt\_20deg}$ &
Move the sliding door to the right while keeping the TCP tilted near 20 degrees. \\

\texttt{gripper} &
$\mathbf{F}\,\texttt{drawer\_close}
\wedge
\mathbf{G}\,\texttt{gripper\_open}$ &
Close the drawer while keeping the gripper open. \\

\bottomrule
\end{tabularx}
\caption{
\textbf{Specifications for expressive guidance.}
We pair each TL specification used for \method{} with the corresponding natural-language command given to the language-conditioned baselines. The first six specifications test task-level temporal structure, while the final three combine task completion with runtime safety constraints.
}
\label{tab:calvin_specs}
\end{table*}

For the second experiment, we evaluate composite temporal-logic objectives against language-conditioned baselines. Table~\ref{tab:calvin_specs} lists the full set of LTL specifications and corresponding language instructions used in the results. Unlike the DynaGuide comparison, we remove the movable blocks from the tabletop in the scene setup. The blocks are not part of our high-level label set, so removing them keeps the evaluation focused on temporal guidance over the behaviors present in the label set and explains the increase in success from 91\% in the single-guidance experiments to 99\% in the complex guidance experiments.

To guide the diffusion policy in both setups, we sample 32 candidate 8-action chunks and execute the one with the highest expected cumulative automaton potential. When safety constraints are present, the dynamics model prediction is integrated into the denoising process through gradients of the low-level scoring term, and we perform 10 additional gradient steps on the sampled action chunk after denoising.

\section{Further Implementation Details}

The Toy Squares and real-world experiments use the same implementation structure as CALVIN. In all domains, the base policy is a diffusion policy adapted from robomimic, and the high-level and low-level world models are implemented as MLPs operating on low-dimensional state observations. We adjust only the model sizes to match the complexity of each domain. In the real-world setup, we obtain the low-dimensional state by using FoundationPose \citep{wen2024foundationpose} to track the position and orientation of the Cheez-It box.

For cyclic tasks, we terminate evaluation after two full cycles, since true infinite-horizon evaluation is not possible. In the real-world cyclic task, \method{} achieves 100\% success over five runs. 

Videos and code will be made available on the project website: \url{https://anonymous-hint2.github.io/}.

\end{document}